\theoremstyle{thmstyleone}%
\newtheorem{theorem}{Theorem}
\newtheorem{proposition}[theorem]{Proposition}%
\theoremstyle{thmstyletwo}%
\theoremstyle{thmstylethree}%
\begin{document}

\title[CavT for Engagement Intensity Prediction]{Class-attention Video Transformer for Engagement Intensity Prediction}


\author*[1]{\fnm{Xusheng} \sur{Ai}}\email{00754@siit.edu.cn}

\author*[2]{\fnm{Victor S.} \sur{Sheng}}\email{victor.sheng@ttu.edu}

\author[1]{\fnm{Chunhua} \sur{Li}}\email{00804@siit.edu.cn}
\author[3]{\fnm{Zhiming} \sur{Cui}}\email{zmcui@usts.edu.cn}

\affil*[1]{\orgdiv{Software and Service Outsourcing College}, \orgname{Suzhou Vocational Institute of Industrial Technology}, \orgaddress{\city{Suzhou}, \postcode{215104}, \country{China}}}

\affil[2]{\orgdiv{Department of Computer Science}, \orgname{Texas Tech University}, \orgaddress{\city{Lubbock}, \postcode{79409}, \country{USA}}}

\affil[3]{\orgdiv{School of Electronics and Information Engineering}, \orgname{Suzhou University of Science and Technology}, \orgaddress{\city{Suzhou}, \postcode{215009},  \country{China}}}


\abstract{In order to deal with variant-length long videos, prior works extract multi-modal features and fuse them to predict students' engagement intensity. In this paper, we present a new end-to-end method Class Attention in Video Transformer (CavT), which involves a single vector to process class embedding and to uniformly perform end-to-end learning on variant-length long videos and fixed-length short videos. Furthermore, to address the lack of sufficient samples, we propose a binary-order representatives sampling method (BorS) to add multiple video sequences of each video to augment the training set. BorS+CavT not only achieves the state-of-the-art MSE (0.0495) on the EmotiW-EP dataset, but also obtains the state-of-the-art MSE (0.0377) on the DAiSEE dataset. The code and models have been made publicly available at https://github.com/mountainai/cavt.
}

\keywords{self-attention, transformer, engagement prediction, video classification}



\maketitle

\section{Introduction}
\label{sec:intro}
With the rise of the internet, online education attracts increasing attention. Intelligent tutoring systems (ITS) and massively open online courses (MOOCs) have become fashionable worldwide. Students gradually get used to remotely studying all the materials they are interested in and obtaining digital certificates. Additionally, COVID-19 pandemic accelerates this tendency. Many universities, high schools, elementary schools, and other educational institutions have been shut down, and teachers have to undertake teaching remotely.

However, a new problem emerges. Even though schools have made the complete transition from traditional in-person to online teaching, teachers are struggling to engage students.  In an offline classroom, a teacher can access to the student engagement intensity through observing facial expressions and social cues, such as yawning, glued eyes, and body postures. But it seems an impossible task for teachers to assess the student engagement in an online learning system, as students could participate in online education using various types of electronic devices under different backgrounds, e.g., home, yards, and parks. Research indicates that many students, estimated between 25\% and 60\%, are reported being disengaged or barely engaged to online courses \cite{nicholls2015some}. Therefore, automatic engagement prediction is of great importance to quantify online educational quality for teachers and learning efficiency for students.

There are various ways to automatically detect engagement, such as action recognition, sensor-based methods, and computer vision methods \cite{whitehill2014faces}. Among these approaches, computer-vision methods based on face extraction have received the most attention. There are two categories about automatic engagement detection. One category is based on handcrafted features and the other makes use of end-to-end models. The former extracts multi-model features, such as eye gaze, head pose, facial expression, and facial features for models to predict the levels of engagement \cite{kaur2018prediction,thong2019engagement,zhu2020multi}. The latter directly feeds video frames to Convolutional Neural Networks (CNNs) to detect the level of engagement \cite{gupta2016daisee,huang2019fine,wang2020automated}.

From the movement in the face (eye gaze, head pose) as well as the facial expression to the facial features obtained by CNN models, multi-model feature extraction is often a tedious and unpleasant task. Moreover, it is uneasy to design a model to fuse multi-modal features. Practitioners \cite{kaur2018prediction,thong2019engagement,zhu2020multi} have to design complicated nested ensembles or hybrid networks to fuse different features extracted from video such that it is difficult to adapt their methods to another database (or scenario).  Additionally, class imbalance is widely seen in many engagement databases, i.e., the EmotiW-EP dataset. For instance, \cite{zhu2020multi} tries to solve the class imbalance problem by applying weighted loss, but they still fail to predict engagement intensity for minority classes. These factors inspire us to develop end-to-end models. 

In the meanwhile, some researchers proposed end-to-end methods to fulfill engagement prediction tasks. For instance, CNN-based ResNet-TCN consists of ResNet and TCN \cite{abedi2021improving}. ResNet extracts spatial features from consecutive video frames whereas TCN captures temporal changes to detect the level of engagement. But in a large-scale image classification, attention-based transformers have been recently shaking up the long supremacy of CNNs. This drives us to compare transformers with CNNs for engagement intensity prediction. Furthermore, the optimization of representative frame selection has been little studied so far.  Limited by computation and space capacity, end-to-end methods usually select representative frames from video and discard the other frames. This leads to a low frame utilization ratio, especially for long videos. As a result, these end-to-end methods are prone to yield a model on which there is a large gap between the training set and the validation set (i.e., low generalization). 

Aiming at transformer adaptation and high frame utilization, we propose a transformer-based method, Class Attention in Video Transformer (CavT), which directly takes a sequence of video frames as the effective input and outputs the engagement intensity of the video in this paper. To make use of as many frames as possible, we also present Binary Order Representatives Sampling (BorS), which adaptively divides different number of frames into sliding windows, and selects representatives each window, and generates video sequences in each video. Consequently, CavT combined with BorS can significantly improve the accuracy of engagement intensity prediction for diverse videos, especially for long videos. The main contributions of this paper are as follows:
\begin{enumerate}
\item We propose a transformer-based CavT method, which combines self-attention between patches and class-attention between class token and patches, to predict the engagement intensity. In contrast to the state-of-the-art methods, CavT can fuse facial spatiotemporal information, which is favorable for capturing the fine-grained engaged state and improving the engagement prediction performance.
\item To address the low-frame-utilization problem, we use BorS to generate multiple video sequences in each video and add them to the training set. Our experimental results prove that BorS is helpful in perceiving the temporal changes of different videos obtained from diverse conditions. 
\item The proposed approach shows strong performance on both the EmotiW-EP dataset and DAiSEE dataset. BorS+CavT achieves 0.0495 MSE on Emotiw-EP and 0.0377 on DAiSEE respectively, which are much better than that of the state-of-the-art methods.
\end{enumerate}

The rest of the paper is organized as follows. In Section~\ref{sec:related}, we introduce related work. In Section~\ref{sec:ourmethod}, we describe a state-of-the-art video sequence generation method and a video transformer for engagement intensity prediction. In Section~\ref{sec:experiments}, we show our experimental results on the EmotiW-EP dataset and DAiSEE dataset respectively. Finally, we conclude our study in Section~\ref{sec:conclusion}.

\section{Related work}
\label{sec:related}
Computer vision has come into use for automatic engagement measurement over the past decade \cite{mohamad2019automatic}. At an early stage, Grafsgaard et al. \cite{grafsgaard2013automatically} analyzed a video corpus of computer-mediated human tutoring based on the Computer Expression Recognition Toolbox (CERT) \cite{littlewort2011computer}. The results of this research reveal significant relationships among facial expression, frustration, and learning. Bosch et al. \cite{bosch2015automatic} adopted computer vision and machine learning techniques to detect five affective states (boredom, confusion, delight, engagement, and frustration). Saneiro et al. \cite{saneiro2014towards} analyzed coherent connections between students' affective states and their multimodal features (facial expression and body movements). These multimodal features combined with additional affective information altogether can serve as automatic engagement prediction. But these methods do not clearly formulate engagement levels and are not designed for wild conditions. 

In the following, Whitehill et al. \cite{whitehill2014faces} developed automatic engagement detectors based on facial expressions and machine learning techniques. They found that automated engagement detectors perform with comparable accuracy to humans for distinguishing high engagement from low engagement. Kamath et al. \cite{kamath2016crowdsourced} proposed an instance-weighted Multiple Kernel Learning SVM method to improve the accuracy of engagement prediction. Monkaresi et al. \cite{monkaresi2016automated} extracted three sets of features from videos, heart rates, Animation Units (from Microsoft Kinect Face Tracker), and local binary patterns in three orthogonal planes (LBP-TOP) \cite{whitehill2014faces}. These features are used in machine learning for training student-independent engagement detectors. However, it is not easy to accurately acquire these hand-crafted features and perform feature engineering on hundreds or even thousands of raw features. 

With the rapid development of deep learning techniques in the computer vision field, they have increasingly been used for engagement prediction. Researchers start considering diverse conditions, i.e., insufficient illumination and strong reflections. Gupta et al. \cite{gupta2016daisee} built the "in the wild" video corpus DAiSEE labelled with four engagement levels. Each video lasts 10 seconds and was recorded in different places. The organizers of the EmotiW Challenges publicized a video database EmotiW-EP in diverse and "in the wild" conditions \cite{kaur2018prediction}. The duration of the videos ranges from $\sim$3 minutes to $\sim$5 minutes. We examined recent works on the two dataset and categorized computer vision based approaches into two groups, i.e., feature-based models and end-to-end models. 

In the feature-based approaches, Yang et al. \cite{yang2018deep} divided a video into segments and calculated the temporal-spatial features of each segment for regressing engagement intensity. They built a multi-modal regression model based on a multi-instance learning framework with a Long Short Term Memory (LSTM), and conducted model ensemble on multiple data splits. Wu et al. \cite{wu2020advanced} extracted motion features, such as facial movements, upper-body posture movements and overall environmental movements in a given time interval, and then incorporated these motion features into a deep learning model consisting of LSTM, Gated Recurrent Unit (GRU) and a fully connected layer. Huang et al. \cite{huang2019fine} firstly captured multi-modal features from faces by using the OpenFace library \cite{baltrusaitis2018openface}, and proposed a Deep Engagement Recognition Network (DERN) combining Bi-LSTM with attention mechanisms to classify the level of engagement on the DAiSEE dataset. Wang et al. \cite{wang2020automated} proposed a CNN architecture to predict the level of engagement on the DAiSEE dataset based on facial landmarks and features extracting from faces. However, the feature extraction process may loss some valuable information more or less such that the lack of key features ruins robustness. Furthermore, some existing methods leverage ensemble models to fuse different features. Ensemble models work well only if the base learners are good and heterogeneous. Ensemble models often fail to work on different databases for their complexity. This motivates us to seek for end-to-end approaches to improve performance and robustness.

In the end-to-end approaches, a deep learning classifier or regressor feeds raw frames of videos or images to predict the level of engagement (for classifiers) or the engagement intensity (for regressors). Geng et al. \cite{geng2019learning} utilized a Convolutional 3D (C3D) model for automatic engagement prediction whereas the focal loss \cite{lin2017focal} is introduced to counter the class-imbalance problem by adaptively adjusting the weights of different levels of engagement. Zhang et al. \cite{zhang2019novel} adapted Inflated 3D Convolutional Network (I3D) to the field of automatic students' engagement recognition. To tackle the imbalanced data distribution of labels of DAiSEE videos, the focal loss and weighted loss \cite{he2019metanet} are used to prevent from misclassifying not-engaged samples. Liao et el. \cite{liao2021deep} presented a Deep Facial Spatio-Temporal Network (DFSTN) for students’ engagement detection for online learning. DFSTN includes a pre-trained SE-ResNet-50 for extracting spatial features from faces and a LSTM with global attention for generating an attentional hidden state. To the best of our knowledge, it is the first time to investigate the performance on both the DAiSEE dataset and the EmotiW-EP dataset. However, DFSTN does not take into account video length. Instead, it samples 20 frames in each video. Actually, the DAiSEE video is 10-second long whereas the duration of the EmotiW-EP video averages between 3 and 5 minutes. Fixed-size video sequence cannot take fully advantage of video frames. Furthermore, a modeling shift is on the way for image recognition and video classification from CNNs \cite{he2016deep,xie2018rethinking} to Transformers \cite{wu2021cvt,lee2021vision,liu2022video}. Therefore, based on the Class Attention in Image Transformer \cite{touvron2021going}, we present Class Attention in Video Transform (CavT) to build a video transformer for engagement intensity prediction. 

\section{Our method}
\label{sec:ourmethod}
In this section, we first introduce the overall architecture of BorS+CavT to address the engagement intensity prediction problem. Then, we  describe Binary Order Representatives Sampling (BorS) to generate heterogeneous video sequences for each video, and use it with CavT to improve the performance of engagement intensity prediction. 

\subsection{Overall architecture}
The overall architecture of our proposed method BorS+CavT is shown in Fig.~\ref{fig:ourarch}. Firstly, the input video is preprocessed by face extraction. Face is the most expressive region reflecting a person's emotional state. Face extraction makes our architecture more robust to challenging conditions, such as low illumination and strong reflection. We leverage OpenFace 2.0 \cite{baltrusaitis2018openface} to extract face images from raw frames serving as data preprocessing. In the following, Binary-Order Representatives Sampling (BorS) yields $r$ sub videos (video sequences) defined to be of size $T \times H \times W \times 3$, consisting of $T$ frames and each frame containing $H \times W \times 3$ pixels, where ($H$, $W$) is the resolution of the face image and 3 corresponds to RGB channels. 

Then, Class Attention in Video Transformer (CavT) transforms a video sequence into a vector. At the beginning, 3D Patch Partition reshapes the video sequence $x \in R^{T \times H \times W \times 3}$ into a sequence of flattened patches along temporal and spatial dimension, generating $k$ = $T/t \cdot H/p \cdot W/p$ patches with patch size = ($t$, $p$, $p$, 3), each of which contains $3tp^2$ pixels. The $k$ patches, [$x_{p}^{1},\dots, x_{p}^k$], serve as the effective input sequence. CavT uses a constant latent vector size $c$ through all layers, so we map $x_p^i$ to a $c$-dimension vector with a trainable linear projection \eqref{eq:projection}. We refer to the output of this projection, $x_e$,  as the patch embeddings.
\begin{equation}
x_{e}=[x_{p}^{1}E;x_{p}^{2}E;,...;x_{p}^{k}E],~~~E\in R^{(3tp^2\times c)}\label{eq:projection}
\end{equation}

\begin{figure}[h]%
\centering
\includegraphics[width=0.98\textwidth]{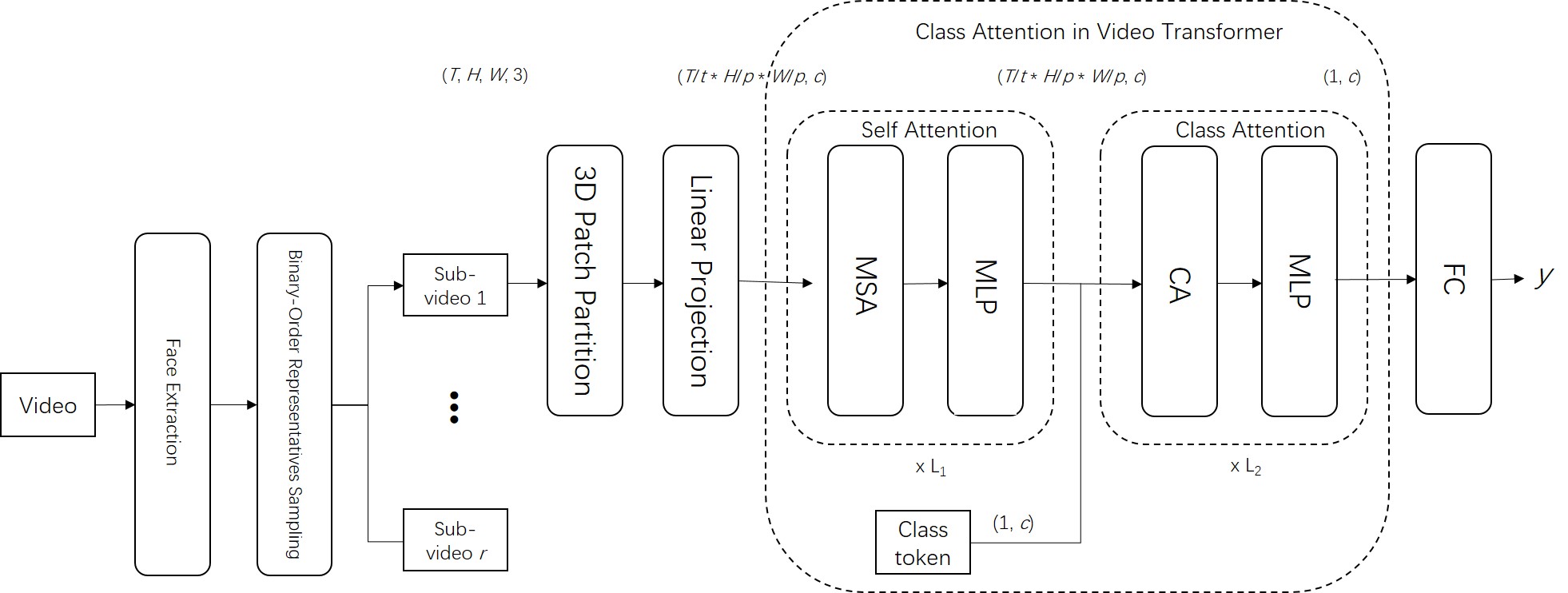}
\caption{The overall architecture of BorS+CavT.}\label{fig:ourarch}
\end{figure}

After that, the self-attention (SA) layers are devoted to compute self-attention weights between patches, whereas the class attention (CA) layers involve the patch embeddings into a simple vector (class embedding) so that it can be fed to a linear classifier. 

Self-attention layer consists of MSA and MLP module, which is identical to standard ViT \cite{dosovitskiy2020image}, but with no class token. Class embedding, which is initialized as class token (denoted by CLS), is inserted on the class-attention stage containing CA modules and MLP modules, and is updated by residual in CA and MLP processing of the class-attention stage. Normalization (LN) layer is applied before each MSA, CA, and MLP module. In the output layer, we use a full-connected layer (FC) to transform the class embedding into $y \in [0, 1]$ representing engagement intensity.

\subsection{Binary-order representative resampling}
To address the lack-of-sample problem, we design Binary-Order Representatives Sampling (BorS) to build multiple video sequences from each video. BorS includes three steps: 1) downsample video; 2) partition frames into slide windows; 3) select representative frame in each slide window. For the convenience of discussion, we define a few notations or functions as follows: \\
$n$: Number of frames. \\
$T$: Number of slide windows. \\
$r$: Number of video sequences to generate from a video. \\
$\gamma$: Sample rate to select the frames of slide windows. \\
$\alpha$: Ratio of the window size and the stride size. \\
$\left\lfloor \right\rfloor$: A function truncates a number to an integer by removing the fractional part of the number.

\subsubsection{Downsample video}
Successive frames are relatively invariant at an interval. To save space and time, we select frames at $\gamma$ intervals. If $\gamma$ is an aliquant part of $n$, we drop the last few frames and let $n$ be $\gamma$ $\left\lfloor n/\gamma \right\rfloor$. Then a video with $n$ frames are compressed into the compressed version with $n/\gamma$ frames.

\subsubsection{Partition frames into slide windows}
In the following, we divide $n/\gamma$ frames into $T$ slide windows. When $T$, $\gamma$, and $\alpha$ are hyper-parameters, we estimate the lower bound of $n$. 

\begin{proposition} \label{prop1}
If $n >= \gamma(T + \alpha - 1)$, $n$ frames can be divided into $T$ slide windows. 
\end{proposition}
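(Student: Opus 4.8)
The plan is to reduce the claim to a simple counting inequality for how many sliding windows of a fixed window-to-stride ratio fit inside the downsampled frame sequence. By the downsampling step, a video of (adjusted) length $n$ with $\gamma \mid n$ is compressed to exactly $m = n/\gamma$ frames. A sliding-window partition is specified by a window size $w$ and a stride $s$ linked by $w = \alpha s$, since $\alpha$ is by definition the ratio of the window size to the stride size. First I would record the condition for $T$ such windows to fit: placing the windows at positions $0, s, 2s, \dots, (T-1)s$, the last one occupies frames up to index $(T-1)s + w - 1$, so all $T$ windows lie within the $m$ frames if and only if $(T-1)s + w \le m$.

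Next I would substitute $w = \alpha s$ to collapse this to the single inequality $s\,(T + \alpha - 1) \le m$. Since the goal is merely to exhibit \emph{some} admissible partition, I would make the constraint as easy as possible to satisfy by choosing the smallest integer stride $s = 1$ (with corresponding window size $w = \alpha$); the requirement then reduces to $m \ge T + \alpha - 1$. Substituting $m = n/\gamma$ and clearing the denominator recovers exactly $n \ge \gamma(T + \alpha - 1)$. Hence whenever this hypothesis holds, the stride-one configuration already places $T$ windows inside the sequence, which proves the proposition.

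The main difficulty I expect is not conceptual but the careful bookkeeping of the two integer truncations that enter. One floor arises in the downsampling, where $n$ is first replaced by $\gamma \lfloor n/\gamma \rfloor$ so that $m = n/\gamma$ is a genuine integer; the other is hidden in the usual window count $\lfloor (m - w)/s \rfloor + 1$, which for $s = 1$ simplifies to $m - \alpha + 1$ only when $\alpha$ is a positive integer. I would therefore make explicit the tacit assumption that $\alpha$ (the window size at unit stride) is integer-valued, and then verify that the final frame index of the last window, $(T-1) + (\alpha - 1) = T + \alpha - 2$, does not exceed $m - 1$ precisely when $m \ge T + \alpha - 1$. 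This confirms that the stated bound is sufficient and, at $s = 1$, exactly the fitting condition, so $n \ge \gamma(T + \alpha - 1)$ is the sharp lower bound guaranteeing that $T$ windows can be formed.
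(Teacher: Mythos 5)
Your proof is correct for the statement as written, but it takes a genuinely different route from the paper's. The paper imposes the requirement that the $T$ windows exactly span all of the downsampled frames: it writes $T = (n/\gamma - \zeta)/\xi + 1$ with window size $\zeta = \alpha\xi$, solves for the stride to get $\xi = n/(\gamma(T+\alpha-1))$, and reads the inequality off from the constraint that $\xi$ be a positive integer. Strictly speaking, that argument establishes \emph{necessity} of the bound and leaves the asserted \emph{sufficiency} direction implicit (one must floor $\xi$ and drop trailing frames, as in the downsampling step, and divisibility of $n$ by $\gamma(T+\alpha-1)$ is quietly assumed). Your argument goes the other way: it is a clean constructive sufficiency proof, exhibiting the stride-$1$, width-$\alpha$ configuration and verifying the fitting condition $(T-1)s + w \le m$ with $m = \lfloor n/\gamma \rfloor$; you also make explicit the integrality assumption on $\alpha$ that the paper leaves tacit, and you correctly observe the bound is tight at unit stride. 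What your choice gives up is fidelity to the intended use: with $s = 1$ your $T$ windows occupy only the first $T+\alpha-1$ frames, leaving the tail of a long video uncovered, whereas the paper's $\xi = (n/\gamma)/(T+\alpha-1)$ is chosen precisely so that the windows span the entire downsampled sequence --- the property BorS needs so that representatives are drawn from the whole video. In short, your proof pins down the literal proposition more rigorously; the paper's derivation, looser on integer bookkeeping, identifies the stride the algorithm actually uses.
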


\begin{proof}[Proof of Proposition~{\upshape\ref{prop1}}]
We define the window size $\zeta$ and the stride size $\xi$. The first window starts at 1 and ends at $\zeta$, the second window starts at $\xi$ + 1 and ends at $\xi$ + $\zeta$, and so on. Then $T$ is obtained as follows:

\begin{align}
  T  &=  {\frac{{n/\gamma  - \zeta }}{\xi } + 1}  \\
     &=  \frac{{n/\gamma  - \alpha \xi }}{\xi } + 1  \\
     &=  \frac{n}{{\gamma \xi }} - \alpha  + 1
\end{align}
or, equivalently,
\begin{align}
  \xi  &= \frac{n}{{\gamma (T + \alpha  - 1)}}
\end{align}
Since $\xi$ is a positive integer, $n >= \gamma(T + \alpha - 1)$ have to be satisfied. That is, if $n >= \gamma(T + \alpha - 1)$, $n$ frames can be divided into $T$ slide windows.
\end{proof}

Usually, $T$, $\gamma$, and $\alpha$ are set small numbers. So the pre-condition is easy to satisfy. Next, we will select different representative frames in each slide window to build $r$ video sequences. 

\subsubsection{Select representative frame in each slide window}
Based on the order of the frames in a slide window, we build a binary tree for each slide window. For instance, a binary tree is shown in Fig.~\ref{fig:bors} when the size of a slide window is 7.  Seen from Fig. 2, the $4^{th}$ frame is the root of the binary tree, the $2^{nd}$ frame is the root of the left subtree, the $6^{th}$ frame is the root of the right subtree, the $1^{st}$ frame is the leftmost leaf, …, the $7^{th}$ frame is located in the rightmost of the bottom layer.

\begin{figure}[h]%
\centering
\includegraphics[width=0.9\textwidth]{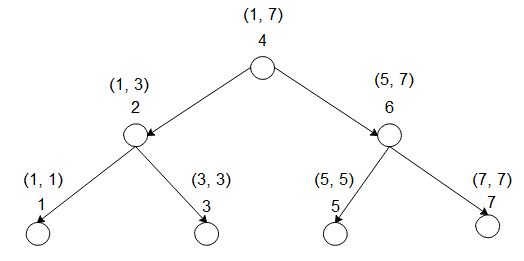}
\caption{An illustration of binary-order representatives of a window. In each node, (a, b) signifies that the window starts from a and ends at b, where a and b are the frame index. The number under window (a, b) is the index of the representative frame.}\label{fig:bors}
\end{figure}

After that, we select different representative frames in the breadth-first-search order of the binary tree in each window to make up $r$ video sequences. For instance, assuming (1 + $\zeta$) mod $2^r$ = 0, BorS yields $r$ video sequences from a video in order as follows. 
\begin{align*}
S^1 &= [(1 + \zeta)/2, (1 + \zeta)/2 + \xi, \cdots, (1 + \zeta)/2 + (T – 1)\xi]\\
S^2 &= [(1 + \zeta)/4, (1 + \zeta)/4 + \xi, \cdots, (1 + \zeta)/4 + (T – 1)\xi]\\
    & \dots\\
S^r &= [(1 + \zeta)/2^r, (1 + \zeta)/2^r + \xi, \cdots, (1 + \zeta)/2^r + (T – 1)\xi]
\end{align*}
 
Finally, the $r$ video sequences are added to the training data. When an end-to-end model fits the enriched training data at an epoch, the resulting model is used for predicting a test video by feeding the first video sequence of the test video $S^1$. As any element of a video sequence is located in the middle of a slide window or its descendant, neighboring frames have little chance of being together elected as representatives when $r$ is small. Thus, BorS can generate heterogeneous video sequences representing as many video fragments as possible. Consequently, our sampling method elaborately avoids both over-fitting caused by homogeneous video sequences to the train data and under-fitting introduced by synthetic samples.

\subsection{Class attention in video transformer}
A video sequence is divided into patches along with temporal and spatial dimension. Each patch contains the same parts of $t$ neighboring images. Patches are transformed into patch embeddings $x_e$ using a trainable linear projection \eqref{eq:projection}. Then, Class-attention Video Transformer (CavT), which is composed of self-attention stage and class-attention stage, compiles the patch embeddings into a $c$-dimension vector. The CavT architecture is depicted in Fig.~\ref{fig:cavt}.

\begin{figure}[h]%
\centering
\includegraphics[width=0.9\textwidth]{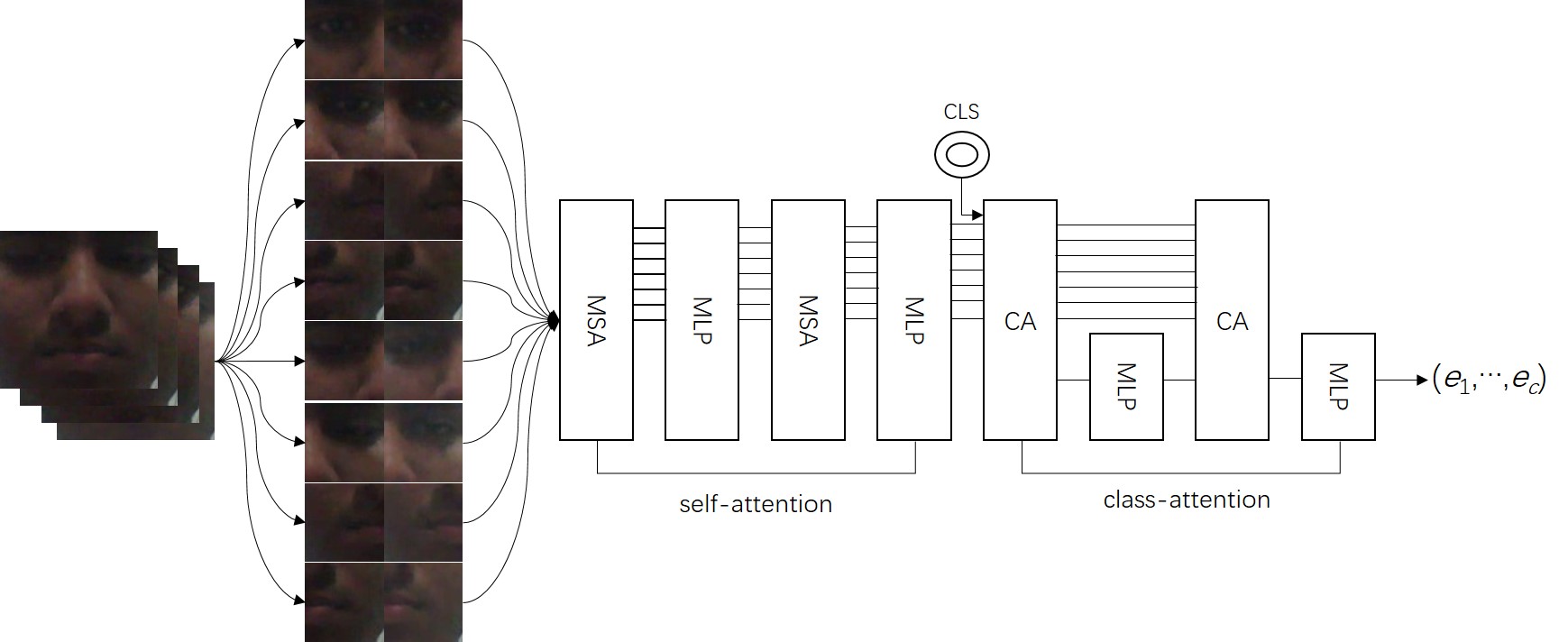}
\caption{The architecture of Class-Attention in Video Transformer (CavT).}\label{fig:cavt}
\end{figure}

\subsubsection{Self-attention stage}
Self-attention stage, which consists of MSA (multi-heads self-attention) and MLP (multi-layer perception) modules, guides the self-attention between patches. The definition of MSA, MLP and LN is identical to that of the ViT transformer \cite{dosovitskiy2020image}. Then we formulate self-attention stage by \eqref{eq:sa_1}, \eqref{eq:sa_2}, and \eqref{eq:sa_3}.
\begin{equation}
u_0=x_e\label{eq:sa_1}
\end{equation}
\begin{equation}
u'_l=diag(\lambda_{l-1,1},\dots,\lambda_{l-1,c})MSA(LN(u_{l-1}))+u_{l-1} \label{eq:sa_2}
\end{equation}
\begin{equation}
u_l=diag(\lambda_{l,1},\dots,\lambda_{l,c})MLP(LN(u'_l))+u'_l \label{eq:sa_3}
\end{equation}
, where the parameters $\lambda_{l-1,i}$, $\lambda_{l,i}$ are learnable weights and $l = 1,\dots,L_1$.

In \eqref{eq:sa_2} and \eqref{eq:sa_3}, we multiply the output of each residual block by a diagonal matrix. We aim at grouping the updates of the weights associated with the same output channel. At the initialization stage, the diagonal is initialized with small values. As training process proceeds, the diagonal values are gradually adaptive to the output of the residual blocks per channel.

\subsubsection{Class-attention stage}
Class-attention stage includes multiple CA layers, each of which consists of a CA module and a MLP module. Initially, a class token (CLS) $ \in R^c$ is inserted as the initial class embedding $x_{class}$. Then, $x_{class}$ plus $u_{L_1}$ (the patch embeddings outputted by the last SA layer) is updated by the CA module and then is updated by the MLP module in turn, as defined in \eqref{eq:ca_1}, \eqref{eq:ca_2}, and \eqref{eq:ca_3}.
\begin{equation}
z_0=[{\rm CLS}, u_{L_1}] \label{eq:ca_1}
\end{equation}
\begin{equation}
z'_l=diag(\beta_{l-1,1},\dots,\beta_{l-1,c})CA(LN(z_{l-1}))+z_{l-1} \label{eq:ca_2}
\end{equation}
\begin{equation}
z_l=diag(\beta_{l,1},\dots,\beta_{l,c})MLP(LN(z'_l))+z'_l \label{eq:ca_3}
\end{equation}
, where the parameters $\beta_{l-1,i}$, $\beta_{l,i}$ are learnable weights and $l = 1,\dots,L_2$.

The CA module is identical to a MSA module, except that it relies on the attention between the class embedding $x_{class}$ and itself plus the patch embeddings $x_u$. Assuming a network with $h$ heads and $k$ patches, we parameterize the CA module with projection matrices, $W_q, W_k, W_v, W_o \in R^{c \times c}$, and the corresponding biases $b_q, b_k, b_v, b_o \in R^c$. We perform the CA module as follows. 
\begin{equation}
Q = {W_q}LN({x_{class}}) + {b_q} \label{eq:ca_4}
\end{equation}
\begin{equation}
K = {W_k}LN([{x_{class}},{x_u}]) + {b_k} \label{eq:ca_5}
\end{equation}
\begin{equation}
V = {W_v}LN([{x_{class}},{x_u}]) + {b_v} \label{eq:ca_6}
\end{equation}

The class-attention weights are given by
\begin{equation}
A = \mathop{\rm Softmax}(QK^T / \sqrt {c/h}) \label{eq:ca_7}
\end{equation}
where $QK^T \in R^{h \times 1 \times k}$. This attention is involved in the weighted sum $A \times V$ to produce the residual output vector.
\begin{equation}
{{\mathop{\rm out}\nolimits} _{CA}} = W_oAV + b_o \label{eq:ca_8}
\end{equation}

The CA module extracts the useful information from the patch embeddings to the class embedding. Empirically, a set of 2 blocks of modules (2 CA and 2 MLP) are sufficient to achieve good performance. In Section \ref{sec:experiments} (Experiments), our CavT consists of 12 blocks of SA+MLP modules ($L_1$ = 12) and 2 blocks of CA+MLP modules ($L_2$ = 2).

\section{Experiments}
\label{sec:experiments}
In this section, we conduct experiments to investigate the performance of our method, comparing to the state-of-the-art methods.

\subsection{Setup}
\subsubsection{Datasets}
We adopt the widely-used EmotiW-EP \cite{zhu2020multi} and DAiSEE \cite{gupta2016daisee} for performance analysis. The EmotiW-EP dataset contains 146 videos in the training set and 48 videos in the validation set.  Observed from Table~\ref{tab:emotiw_dist}, there are only several videos labelled 0 (denoting very low engagement). The DAiSEE dataset consists of 5358 training videos, 1429 validation videos, and 1784 test videos in four engagement intensity levels. We combined the train and validation set to train models and report results on 1784 test videos. As it can be seen in Table~\ref{tab:daisee_dist}, the dataset is highly imbalanced, only 0.63\%, 1.61\%, and 0.22\% percent of train, validation, and test sets are in the engagement intensity level 0 (corresponding to very low engagement). Figure~\ref{fig:emotiw_samples} and Figure~\ref{fig:daisee_samples} illustrate the examples of videos from the EmotiwW-EP dataset and the DAiSEE dataset respectively.

\begin{table}[h]
\begin{center}
\begin{minipage}{250pt}
\caption{\# samples in train and validation set on EmotiW-EP.}\label{tab:emotiw_dist}%
\begin{tabular}{p{2.5cm}p{2.5cm}p{2.5cm}}
\toprule
Engagement intensity & \# of training videos & \# of validation videos \\
\midrule
0  &  5  &  4 \\ 
0.33  &  35  &  10 \\ 
0.66  &  81  &  19 \\ 
1  &  28  &  15 \\ 
\botrule
\end{tabular}
\end{minipage}
\end{center}
\end{table}

\begin{table}[h]
\begin{center}
\begin{minipage}{250pt}
\caption{\# samples in train, validation and test set on DAiSEE.}\label{tab:daisee_dist}%
\begin{tabular}{p{2cm}p{2cm}p{2cm}p{2cm}}
\toprule
Engagement intensity & \# of training videos & \# of validation videos & \# of test videos\\
\midrule
0  &  34  &  23  &  4\\ 
0.33  &  213  &  143  &  84\\ 
0.66  &  2617  &  813  &  882\\ 
1  &  2494  &  450  &  814\\ 
\botrule
\end{tabular}
\end{minipage}
\end{center}
\end{table}

\begin{figure}[h]%
\centering
\includegraphics[width=0.9\textwidth]{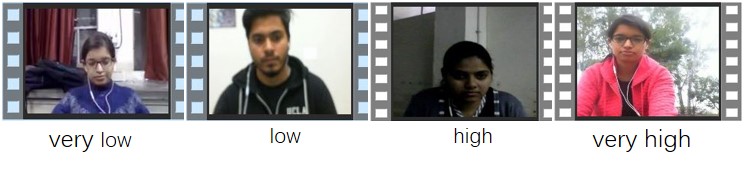}
\caption{Examples of videos from EmotiW-EP dataset. Left to right columns show engagement intensity level: [0 (very low) - 3 (very high)]. Videos have several FPS, i.e., 15, 30, 29.89. Duration of videos varies from 3 minutes to 6 minutes. \# of frames fluctuates from 1500 to 10000.}\label{fig:emotiw_samples}
\end{figure}

\begin{figure}[h]%
\centering
\includegraphics[width=0.9\textwidth]{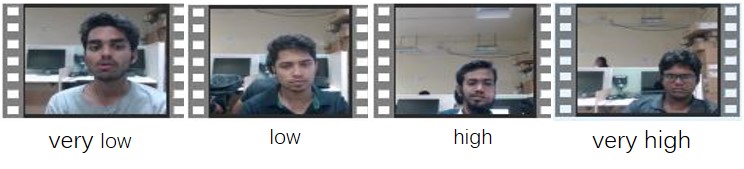}
\caption{Examples of videos from DAiSEE dataset. Left to right columns show engagement intensity level: [0 (very low) - 3 (very high)]. FPS of videos is fixed 30. Each video lasts 10 seconds so the frames in a video numbers 300 in all.}\label{fig:daisee_samples}
\end{figure}

\subsubsection{Implementation details}
We employ the Adam optimizer \cite{kingma2014adam} for 20 epochs with a learning rate 1e-5.  Following \cite{huang2016deep}, the stochastic depth is employed with drop rate 0.05. We empirically configure the resolution of the face image (112, 112) and the patch size (2, 14, 14, 3) respectively. We adopt 16 heads to compute self-attention weights of 1024-dimension patch embeddings or class embedding. When BorS is applied, we set the sampling rate 5 and the ratio of the window size to the stride size 3 for experiments on the EmotiW-EP dataset, whereas sampling rate = 1 and the ratio of the window size to the stride size = 5 for experiments on the DAiSEE dataset. We adopt Mini-batch Stochastic Gradient Descent (MBGD) to fully utilize the power of our H3C G4900 server, which is equipped with a 32G Tesla GPU card. We configure the batch size 4 for training on the EmotiW-EP dataset and 8 for training on the DAiSEE dataset, respectively. 

\subsection{Comparisons with state-of-the-art methods}
\subsubsection{EmotiW-EP}
For performance comparisons, we compare the performance of CavT with the end-to-end methods (ResNet+TCN \cite{abedi2021improving}, C3D \cite{geng2019learning}, I3D \cite{zhang2019novel}, Swin-L \cite{liu2022video}, S3D \cite{xie2018rethinking}) and feature-based method MAGRU \cite{zhu2020multi}. We also implemented the combination of the BorS and CavT (BorS+CavT) with sampling times $r$ = 4 to investigate its performance, comparing to CavT. Zhu et al. does not publish code with their paper so we directly excerpt their experimental results from \cite{zhu2020multi}. We implemented the ResNet+TCN, C3D, I3D, Swin-L, S3D method based on several Github public repositories. 

Table~\ref{tab:emotiw-method-comp} shows the experimental results of different methods on the EmotiW-EP dataset. In this table, the MSE and mean of individual MSEs (MMSE) are shown for our methods and prior works. Table III indicates that the CavT method achieves the lower MSE (0.0667) and MMSE (0.0806) than all prior end-to-end methods. The BorS+CavT method achieves the lowest MSE (0.0495) and MMSE (0.0673) among all of methods, including the state-of-the-art MAGRU. Zhu et al. does not report the MMSE result of the MAGRU method on the validation set. But based on their results on the test set (0.105), we infer that their MMSE on the validation set should be still high. MMSE is a useful indicator measuring the degree of class imbalance. When we train an end-to-end model on an imbalanced video dataset, we still can generate multiple video sequences for each video to decrease the MSE of individual engagement level such that a low MMSE as well as a low MSE is attained. Furthermore, the EmotiW-EP videos are minute-long. It is eligible to build multiple heterogeneous video sequences for each video to improve model generalization. This is why BorS+CavT achieves the lower MSE than CavT does, but without applying any balancing method as \cite{zhu2020multi} does. 

The fourth column of Table~\ref{tab:emotiw-method-comp} shows the number of frames in a video sequence. Compared to prior end-to-end methods, we configure CavT with the least number of frames (32), but it can achieve the lowest MSE. The fifth and sixth column of this table shows computation complexity in term of FLOPs and Parameter. Compared to the state-of-the-art video transformer Swin-L, the cost of CavT is less expensive in term of computation and memory. Additionally, BorS only increases computation cost on the training stage, so that FlOPs and Parameters of the BorS+CavT method are identical to those of the CavT method.

\begin{table}[h]
\begin{center}
\begin{minipage}{280pt}
\caption{Performance comparisons on EmotiW-EP. "MMSE" indicates the mean of MSEs of very low (0), low (0.33), high (0.66) and very high (1). "Frames" indicates \# of frames in a video sequence. The magnitudes are GIGA (10e9) and MEGA(10e6) for FLOPs and Parameter respectively.}\label{tab:emotiw-method-comp}%
\begin{tabular}{p{2cm}p{1.1cm}p{1.1cm}p{1.1cm}p{1.1cm}p{1.1cm}}
\toprule
Symbol & MSE & MMSE & Frames & FLOPs & Param\\
\midrule
CavT  &  0.0667  &  0.0806  &  32  &  89.33  &  119.85\\ 
BorS+CavT  &  \pmb{0.0495}  &  \pmb{0.0673}  &  32  &  89.33  &  119.85\\ 
ResNet+TCN  &  0.096  &  0.1913  &  50  &  24.78  &  12.95\\ 
C3D  &  0.0904  &  0.1635  &  128  &  270.19  &  63.21\\ 
I3D  &  0.0741  &  0.0882  &  64  &  111.33  &  14.39\\ 
Swin-L  &  0.0813  &  0.129  &  32  &  545.1  &  195.58\\ 
S3D  &  0.1309  &  0.1406  &  64  &  72.69  &  10.02\\ 
MAGRU  &  0.0517  &  n/a  &  n/a  &  n/a  &  n/a\\ 
\botrule
\end{tabular}
\end{minipage}
\end{center}
\end{table}

\subsubsection{DAiSEE}
On the DAiSEE dataset, we conduct experiments to investigate the performance of CavT, comparing with ResNet+TCN \cite{abedi2021improving}, C3D \cite{geng2019learning}, I3D \cite{zhang2019novel}, Swin-L \cite{liu2022video}, S3D \cite{xie2018rethinking}, and DFSTN \cite{liao2021deep}. We also evaluated the performance of BorS+CavT configured with sampling times $r$ = 3, comparing to CavT. Liao et el. does not publish code with their paper so we report the experimental results from \cite{liao2021deep}.

\begin{table}[h]
\begin{center}
\begin{minipage}{250pt}
\caption{Performance comparison on DAiSEE.}\label{tab:daisee-method-comp}%
\begin{tabular}{p{2cm}p{1.1cm}p{1.1cm}p{1.1cm}p{1.1cm}p{1.1cm}}
\toprule
Symbol & MSE & MMSE & Frames & FLOPs & Param\\
\midrule
CavT  &  0.0391  &  0.2012  &  8  &  25.03  &  119.85\\ 
BorS+CavT  &  \pmb{0.0377}  &  \pmb{0.1846}  &  8  &  25.03  &  119.85\\ 
ResNet+TCN  &  0.039  &  0.204  &  8  &  4.26  &  12.95\\ 
C3D  &  0.0401  &  0.229  &  64  &  115.79  &  63.21\\ 
I3D  &  0.0397  &  0.2145  &  64  &  111.33  &  14.39\\ 
Swin-L  &  0.0393  &  0.2308  &  32  &  545.1  &  195.58\\ 
S3D  &  0.0402  &  0.2172  &  64  &  72.69  &  10.02\\ 
DFSTN  &  0.0422  &  n/a  &  n/a  &  n/a  &  n/a\\ 
\botrule
\end{tabular}
\end{minipage}
\end{center}
\end{table}

Table~\ref{tab:daisee-method-comp} shows the experimental results of applying different methods to the engagement intensity prediction task on the DAiSEE dataset. The MSE and MMSE are shown in the second and third column, respectively. Table IV indicates that the BorS+CavT method achieves the lowest MSE (0.0377) and the lowest MMSE (0.1846) among all of methods. Compared with the experimental results conducted on the EmotiW-EP dataset, the BorS+CavT method does not significantly boost the performance of CavT (MSE = 0.0391). The reason behind is that DAiSEE videos are relatively short (only 10-second long), and there are only 300 frames in each video. It is difficult for BorS to generate heterogeneous video sequences. An interesting thing is that the MMSE values of all of methods are high. This indicates that class imbalance instead of insufficient data is a major obstacle to the prediction of engagement intensity. We leave it open for further research.

The fourth column of Table IV shows the number of frames in a video sequence. Compared to prior methods, we configure BorS+CavT with the least number of frames (8) and still attain the lowest MSE. FLOPs and Parameter are shown in the fifth and sixth column of this table. The computation cost and memory occupation of BorS+CavT, which is much cheaper than the state-of-the-art video transformer Swin-L, is identical to those of the CavT method.  

\subsection{Ablation study}
\subsubsection{Face extraction}
In MAGRU \cite{zhu2020multi} and DFSTN \cite{liao2021deep}, face extraction is a preliminary step intended for noise removal. This motivates an ablation study on the effect of face extraction. We performed face extraction by using OpenFace 2.0 \cite{baltrusaitis2018openface} on the EmotiW-EP dataset. Table~\ref{tab:ablation-face-extraction} indicates that CavT with face extraction outperforms CavT without face extraction in terms of MSE and MMSE. The results empirically prove that face is the most important region to express engagement intensity.

\begin{table}[h]
\begin{center}
\begin{minipage}{174pt}
\caption{Ablation study on the face extraction approach with CavT on EmotiW-EP.}\label{tab:ablation-face-extraction}%
\begin{tabular}{p{3cm}p{0.9cm}p{0.9cm}}
\toprule
 & MSE & MMSE \\
\midrule
w. face extraction  &  \pmb{0.0667}  &  \pmb{0.0806}\\ 
w/o face extraction  &  0.0759  &  0.1227\\ 
\botrule
\end{tabular}
\end{minipage}
\end{center}
\end{table}

\subsubsection{Weighted loss}
To address the class imbalance problem, MAGRU \cite{zhu2020multi} and DFSTN \cite{liao2021deep} adopted weighted loss function to pose higher weights on the minority class samples, i.e., samples labelled with very-low intensity level. We perform an ablation study over class weights of 1/3/6/9/12/15 for the very-low samples and report results in Table~\ref{tab:ablation-class-weight}. We find that weighted loss does not boost the performance of CavT. Instead, the weighted loss method makes MMSE higher. 

\begin{table}[h]
\begin{center}
\begin{minipage}{174pt}
\caption{Ablation study on posing class weights on the very-low samples with CavT on EmotiW-EP.}\label{tab:ablation-class-weight}%
\begin{tabular}{p{3cm}p{1cm}p{1cm}}
\toprule
Class Weight of DE  & MSE & MMSE \\
\midrule
1  &  \pmb{0.0667}  &  \pmb{0.0973}\\ 
3  &  0.0689  &  0.1115\\ 
6  &  0.0694  &  0.1191\\ 
9  &  0.0708  &  0.1056\\ 
12  &  \pmb{0.0667}  &  0.1105\\ 
15  &  0.0704  &  0.1239\\
\botrule
\end{tabular}
\end{minipage}
\end{center}
\end{table}

\subsubsection{Sampling times}
The sampling times of BorS $r$ is a hyper-parameter. Ablations of the sampling times are reported in Table~\ref{tab:ablation-sampling-times}. The MSE and MMSE of BorS+CavT gradually decrease with the rise of sampling times till BorS+CavT attains the lowest MSE at $r$ = 4. After that, BorS+CavT with too many sampling times yields the worse performance, i.e., MSE = 0.0604 when $r$ = 6. Theoretically, when BorS generates homogeneous video sequences with excessive sampling times, BorS+CavT suffers from the over-fitting problem with a high probability. Thus, we have to empirically do parameter search for r on different databases. For example, we configure BorS with $r$ = 4 on the EmotiW-EP dataset and $r$ = 3 on the DAiSEE dataset, respectively.

\begin{table}[h]
\begin{center}
\begin{minipage}{174pt}
\caption{Ablation study of sampling times with BorS+CavT on EmotiW-EP.}\label{tab:ablation-sampling-times}%
\begin{tabular}{p{3cm}p{1cm}p{1cm}}
\toprule
Sampling Times  & MSE & MMSE \\
\midrule
1  &  0.0667  &  0.0973\\ 
2  &  0.0634  &  0.0917\\ 
3  &  0.0548  &  0.0908\\ 
4  &  \pmb{0.0495}  &  \pmb{0.0796}\\ 
5  &  0.0549  &  0.0964\\ 
6  &  0.0604  &  0.0825 \\ 
\botrule
\end{tabular}
\end{minipage}
\end{center}
\end{table}

\subsubsection{Sampling alternative}
For the election of representatives of a slide window, we have two choices, random sampling or BorS. Unlike random sampling, BorS elects representatives located in the middle of a slide window or its descendant window in binary order, so that the representatives is distributed in an even pattern. As shown in Table~\ref{tab:ablation-sampling-alt}, compared to random sampling (MSE = 0.0652), BorS leads to superior performance (MSE = 0.0495).

\begin{table}[h]
\begin{center}
\begin{minipage}{174pt}
\caption{Ablation study on the two sampling methods with CavT on EmotiW-EP.}\label{tab:ablation-sampling-alt}%
\begin{tabular}{p{3cm}p{1cm}p{1cm}}
\toprule
Sampling method & MSE & MMSE \\
\midrule
BorS  &  \pmb{0.0495}  &  \pmb{0.0796}\\ 
random  &  0.0652  &  0.0893\\ 
\botrule
\end{tabular}
\end{minipage}
\end{center}
\end{table}

\section{Conclusions}
\label{sec:conclusion}
In this paper, we proposed an end-to-end architecture CavT for engagement intensity prediction that is based on self-attention between patches and class attention between class token and patches. To improve the performance of CavT, we further developed a binary order representatives sampling method BorS to address insufficient data for training a model and adaption of the class-attention transformer in image recognition for engagement intensity prediction. The proposed approaches (CavT and BorS+CavT) achieve the state-of-the-art performance on two widely used benchmarks, EmotiW-EP and DAiSEE. And BorS does improve the performance of CavT.

\section*{Acknowledgements}
\label{sec:ack}
This research was partially supported by the National Natural Science Foundation of China under grant No. 61876217, 62176175.

\section*{Declarations}
\begin{itemize}
\item National Natural Science Fundation of China [grants number 61876217, 62176175]
\item There are no conflict of interest/Competing interests.
\item The data and materials are available.
\item The code is available in the github repository, https://github.com/mountainai/cavt.
\end{itemize}


\bibliography{cavt}


\end{document}